\documentclass[10pt,twocolumn]{article}

\usepackage[margin=1in]{geometry}

\usepackage{microtype}
\usepackage{graphicx}
\usepackage{subcaption}
\usepackage{booktabs}
\usepackage[colorlinks=true,citecolor=blue,linkcolor=blue,urlcolor=blue]{hyperref}
\usepackage{natbib}

\usepackage{amsmath}
\usepackage{amssymb}
\usepackage{amsfonts}
\usepackage{amsthm}

\usepackage{enumitem}

\usepackage{pifont}
\usepackage{times}

\newtheorem{theorem}{Theorem}

\newtheorem{property}{Property}

\newcommand{\cmark}{\ding{51}}
\newcommand{\xmark}{\ding{55}}

\begin{document}

\title{\textbf{Cost-Efficient Multimodal LLM Inference via Cross-Tier GPU Heterogeneity}}
\author{Donglin Yu\\donglin5@illinois.edu}
\date{}
\twocolumn[\maketitle
\begin{abstract}
\noindent
Multimodal large language model (MLLM) inference splits into two phases
with opposing hardware demands: vision encoding is compute-bound, while
language generation is memory-bandwidth-bound. We show that under standard
transformer KV caching, the modality boundary (between vision encoder and
language model) minimizes cross-device transfer among all
partition points that preserve standard stage-based execution. Partitioning here reduces transfer complexity from
$O(L \cdot s_{\text{ctx}})$ bytes (GB-scale KV caches under stage-level
disaggregation) to $O(N_v \cdot d)$ bytes (MB-scale embeddings), an $O(L)$
reduction where $L$ is the transformer depth. The result holds across
attention mechanisms (MHA/GQA), dynamic vision resolutions, and model
scales, and the advantage grows as models deepen. A direct implication is
that existing stage-level disaggregation systems are constrained to
high-bandwidth interconnects (e.g., NVLink), whereas modality-level
disaggregation enables cross-tier heterogeneous serving over commodity
PCIe. A closed-form cost model shows that
heterogeneous deployment is cost-optimal under phase-separable workloads
(predicts 31.4\% savings; observed 40.6\%). We build HeteroServe, a
phase-aware runtime with modality-level partitioning and cross-tier
scheduling, and evaluate it on LLaVA-1.5-7B and Qwen2.5-VL against
vLLM~v0.3.0. On identical 4$\times$A100 hardware, engine optimizations
raise throughput by up to 54\%. Under a fixed budget, a heterogeneous
cluster (\$38k) improves Tokens/\$ by 37\% over a homogeneous baseline
(\$64k) without degrading latency.
\end{abstract}
\vspace{1em}
]

\section{Introduction}

Multimodal large language model (MLLM) inference exhibits an architectural
mismatch: vision encoding and language decoding have different hardware
optimality regimes. Vision encoding is
compute-bound, saturating FP16 tensor cores with negligible memory
bandwidth demand. Language decoding is memory-bandwidth-bound, streaming
model weights and KV caches from HBM with minimal arithmetic intensity.
These two phases stress completely different GPU resources, yet all deployed
MLLM serving systems execute both on \emph{homogeneous} datacenter
hardware, paying a well-documented ``HBM tax''~\citep{cauchy2025,hetis2025}:
the compute-bound phase wastes expensive high-bandwidth memory, while the
bandwidth-bound phase underutilizes tensor cores. In practice, providers must
overpay for HBM even when the current workload is encoder-heavy, because
the entire pipeline is locked to a single GPU tier.

Existing disaggregation systems attempt to address inference efficiency by
partitioning at \emph{pipeline stage boundaries}, i.e., separating prefill
from decode (EPD~\citep{epd2025}, Cauchy~\citep{cauchy2025}). However, stage-level
partition incurs KV cache transfer cost proportional to model depth:
$O(L \cdot s_{\text{ctx}})$ bytes (hundreds of megabytes to gigabytes per
request). This GB-scale transfer limits scalability across heterogeneous GPU tiers,
because it demands NVLink or InfiniBand and
confines all prior disaggregated serving to homogeneous datacenter
environments. Intra-node scheduling (SpaceServe~\citep{spaceserve2025},
UnifiedServe~\citep{unifiedserve2025}) improves utilization through kernel
co-location but does not change the cross-device communication structure and
therefore cannot unlock cross-tier deployment.
The bottleneck is not disaggregation itself but
\emph{where the graph is cut}. No existing system exploits the modality
boundary for heterogeneous serving.

Why has stage-level partitioning gone unquestioned? We argue that it
reflects a \emph{language-only assumption}: in text-only LLMs, the KV cache
is the dominant intermediate state and stage boundaries are the only
meaningful partition points. Multimodal inference violates this assumption: the vision encoder produces
a compact embedding of size
$O(N_v \cdot d)$ (MB-scale), \emph{independent of transformer depth $L$}.
Yet no prior system has exploited this separation for cross-device serving.

We formalize this observation as Theorem~\ref{thm:transfer}: under standard
transformer KV caching (i.e., without activation recomputation or KV
offloading), the modality boundary (between vision encoder and language
decoder) minimizes cross-device transfer complexity among
partition points that preserve standard stage-based execution semantics. The transfer ratio between stage-level and
modality-level disaggregation scales as $\Theta(L)$, corresponding to
$12\times$--$196\times$ across current MLLMs. The result holds independent
of hidden dimension, attention mechanism (MHA or GQA), vision token count,
and model scale, and the advantage grows with model depth.

In practice, this reduces cross-device transfer from GB-scale to MB-scale,
making cross-tier heterogeneous serving over commodity PCIe feasible.
Consumer GPUs (RTX~4090, \$3k, 330~TFLOPS) can handle vision encoding while
datacenter GPUs (A100, \$16k, 2~TB/s HBM) handle language decoding,
a $3\times$ FLOPs-per-dollar advantage that is inaccessible under
stage-level disaggregation. A closed-form cost model further shows that
heterogeneous deployment is cost-optimal under phase-separable workloads
(predicts 31.4\% savings; we observe 40.6\%).

We build HeteroServe, a phase-aware runtime with embedding-only transfer,
cross-type work stealing for idle-capacity recovery, and standard engine
optimizations.

We make the following contributions:

\begin{itemize}
\item \textbf{Transfer Optimality Analysis
  (\S\ref{sec:characterization}--\S\ref{sec:generalization}).}
  We characterize the phase separation in MLLM inference
  (\S\ref{sec:characterization}) and prove
  (Theorem~\ref{thm:transfer}) that, under standard KV caching, the
  modality boundary minimizes cross-device transfer under standard
  stage-based execution, reducing communication by $O(L)$ compared to
  stage-level disaggregation, with empirical ratios of
  $12\times$--$196\times$ across five architectures. We further derive a
  closed-form cost model (\S\ref{sec:cost}) showing that heterogeneous
  deployment is cost-optimal under phase-separable workloads (predicts
  31.4\% savings; we observe 40.6\%).

\item \textbf{System Validation: HeteroServe (\S\ref{sec:system}).}
  We build HeteroServe, a phase-aware runtime that validates the analysis
  on real hardware, with embedding-only transfer over PCIe, cross-type
  work stealing for utilization recovery, and CUDA-Graph-accelerated
  decoding.

\item \textbf{Empirical Validation (\S\ref{sec:experiments}).}
  On LLaVA-1.5-7B (MHA) and Qwen2.5-VL (GQA, tensor-parallel) with
  vLLM~v0.3.0 baseline: on identical 4$\times$A100 hardware, engine
  optimizations raise throughput by up to 54\%
  (\S\ref{sec:disentangle}); under a fixed budget, modality-level
  heterogeneous deployment (\$38k) improves Tokens/\$ by 37\% over a
  homogeneous baseline (\$64k) without degrading latency.
\end{itemize}

\noindent Engine optimizations (CUDA Graph, packed prefill, lazy KV
allocation) serve as implementation hygiene to avoid confounding the
architectural evaluation; they are not claimed as contributions.

\section{Multimodal Inference Phase Characterization}
\label{sec:characterization}

We begin by characterizing the computational phases of MLLM inference and
their hardware implications.

The end-to-end latency of a single multimodal request decomposes as
$T_{\text{e2e}} = T_{\text{vision}} + T_{\text{xfer}} + T_{\text{prefill}} + T_{\text{decode}}$.
Each phase has a distinct resource bottleneck.
\textbf{Vision encoding} ($T_{\text{vision}}$) is \emph{compute-bound}:
on an RTX~4090, batched encoding achieves $>$85\% FP16 TFLOPS utilization
(${\sim}280$ of 330~TFLOPS) while consuming $<$5\% of memory bandwidth.
The compute cost is fixed per image and independent of language model
depth~$L$.
\textbf{Language decoding} ($T_{\text{decode}}$) is
\emph{memory-bandwidth-bound}: autoregressive generation streams weights and
KV caches from HBM with minimal arithmetic reuse, achieving $>$80\% HBM
utilization on A100 (${\sim}1.6$ of 2~TB/s) while using $<$10\% of FP16
compute.
\textbf{Language prefill} ($T_{\text{prefill}}$) is compute-bound but
operates on language model weights.

These bottlenecks map to \emph{orthogonal hardware optimality regimes}.
Vision encoding maximizes performance on high-TFLOPS, low-cost GPUs: an
RTX~4090 (\$3k, 330~TFLOPS) matches an A100 (\$16k, 312~TFLOPS) at
$3.3\times$ better FLOPs/\$. Language decoding requires high-bandwidth GPUs:
A100 provides 2~TB/s HBM with 80~GB; RTX~4090 has only 1~TB/s with 24~GB.
Homogeneous deployments cannot resolve this mismatch, motivating a formal
analysis of \emph{where} to partition the pipeline.

\section{Related Work}

We organize prior work along two axes that jointly define the design space
of disaggregated multimodal serving: (1)~\emph{partition granularity}, i.e., where
the inference graph is cut (stage boundary, modality boundary, or kernel
co-location); and (2)~\emph{deployment regime}, i.e., whether the system supports
cross-device, cross-tier (PCIe), or only intra-node (NVLink/IB) execution.
As Table~\ref{tab:comparison} summarizes, no existing system simultaneously
achieves modality-level partitioning, cross-tier heterogeneous support, and
formal transfer analysis.

\paragraph{LLM serving and memory optimization.}
Systems such as vLLM~\citep{kwon2023efficient,vllm2024} have
advanced memory management and continuous batching for autoregressive
decoding on homogeneous hardware. These optimizations are complementary to
ours; HeteroServe builds on similar batching and KV-cache management
techniques. However, these systems are designed for text-only models and
homogeneous execution pipelines: they do not exploit the resource
heterogeneity intrinsic to multimodal inference, do not analyze partition
point optimality, and do not consider cross-tier GPU deployment. We use
vLLM~v0.3.0 as our primary baseline under identical hardware and
parallelism configurations; the throughput difference we observe
(\S\ref{sec:disentangle}) arises from architectural disaggregation and
engine optimizations, not from baseline misconfiguration.

\paragraph{Stage-level disaggregation.}
A growing line of work disaggregates LLM inference at \emph{pipeline stage
boundaries}. EPD~\citep{epd2025} separates multimodal inference into
encoding, prefill, and decode stages across devices, introducing dynamic
role switching to rebalance capacity. Cauchy~\citep{cauchy2025} optimizes
cost-efficiency by selecting heterogeneous GPU ``combos'' for prefill-decode
disaggregation. However, all stage-level approaches require transferring full KV caches ($O(L \cdot
s_{\text{ctx}})$ bytes, GB-scale) between stages, demanding NVLink or
InfiniBand interconnects. This transfer bottleneck confines these systems to
datacenter-grade links and precludes participation of consumer GPUs connected
via PCIe. Consequently, their conclusions do not extend to cross-tier
heterogeneous deployments. Our work shows that this limitation is not
inherent to disaggregation itself, but rather a consequence of partitioning
at the \emph{wrong boundary}.

\paragraph{Multimodal serving systems.}
ModServe~\citep{modserve2025} separates vision and language into independent
instance pools to mitigate head-of-line blocking, and
CornServe~\citep{cornserve2025} generalizes this via offline deployment
planning across monolithic and disaggregated execution graphs. Both
partition at or near the modality boundary, but assume \emph{homogeneous}
datacenter GPUs for all pools, forgoing the cost advantage of consumer
hardware. Neither identifies modality boundary optimality nor
analyzes transfer complexity scaling, the theoretical foundation that
enables cross-tier deployment. SpaceServe~\citep{spaceserve2025} improves
intra-GPU utilization through spatial multiplexing of complementary
compute-bound and memory-bound kernels, reducing TPOT when encoder and
decoder workloads are co-located. UnifiedServe~\citep{unifiedserve2025}
achieves similar goals via MPS-based resource sharing. In particular,
SpaceServe's optimization target is single-machine SM/bandwidth utilization;
it does not attempt, and structurally cannot, solve the KV cache
cross-device migration problem, and therefore cannot unlock consumer-GPU
participation via PCIe. These systems optimize \emph{intra-node} scheduling
(axis~2: NVLink/IB only) under fixed hardware topologies but do not address
\emph{cross-device} partition optimality or cross-tier deployment,
the orthogonal axis that our work targets.

\paragraph{Heterogeneous GPU serving.}
Hetis~\citep{hetis2025} manages heterogeneous clusters through head-level KV
cache partitioning for text-only LLMs, but does not address the multimodal
pipeline or exploit the compute-bound nature of vision encoding.
Cauchy considers GPU heterogeneity but inherits stage-level KV transfers
($O(L \cdot s_{\text{ctx}})$), requiring high-bandwidth interconnects.
Neither system exploits the observation that multimodal inference
contains a \emph{compute-bound, bandwidth-agnostic} phase (vision encoding)
that can be offloaded to consumer GPUs with only PCIe connectivity. Our work
identifies this asymmetry and provides the first formal cost model
for cross-tier multimodal serving.

\begin{table}[h]
\centering
\caption{Comparison of disaggregated MLLM serving systems. Our work is the
  only system combining modality-level partitioning with cross-tier support
  and formal cost modeling.}
\label{tab:comparison}
\scriptsize
\setlength{\tabcolsep}{1.8pt}
\begin{tabular}{@{}lccccc@{}}
\toprule
\textbf{System} & \textbf{Partition} & \textbf{Transfer} & \textbf{Deploy} & \textbf{Cost} & \textbf{Transfer} \\
                 & \textbf{bound.}    & \textbf{size}     & \textbf{regime} & \textbf{model} & \textbf{analysis} \\
\midrule
EPD              & Stage     & GB (KV)  & NVLink   & \xmark    & \xmark \\
ModServe         & Modality  & MB (emb) & NVLink   & \xmark    & \xmark \\
SpaceServe       & Co-loc.   & ---      & Intra    & \xmark    & \xmark \\
Cauchy           & Stage     & GB (KV)  & NVLink   & Heuristic & \xmark \\
\textbf{Ours}    & \textbf{Modal.} & \textbf{MB} & \textbf{PCIe} & \textbf{Formal} & \textbf{\cmark} \\
\bottomrule
\end{tabular}
\end{table}

\section{Modality-Level Transfer Optimality}
\label{sec:insight}

The phase characterization in Section~\ref{sec:characterization} showed
that vision encoding and language decoding have different hardware optimality
regimes. The question is how this affects the choice of partition point for
cross-device execution.

A transformer decoder accumulates one
key-value pair per layer, so the KV cache grows as
$O(L \cdot s_{\text{ctx}})$. The vision encoder produces a single embedding
of size $O(N_v \cdot d)$, independent of model depth $L$. Any system that
partitions at stage boundaries must pay the $O(L)$ communication cost; any
system that partitions at the modality boundary avoids it entirely.

Section~\ref{sec:disagg} quantifies the transfer gap with concrete
numbers, Section~\ref{sec:generalization} formalizes the scaling behavior
(Theorem~\ref{thm:transfer}), and Section~\ref{sec:cost} derives a cost
model for heterogeneous deployment.

\subsection{Why Stage-Level Disaggregation Is Suboptimal}
\label{sec:disagg}

The central question for any disaggregated serving system is \emph{where} to
partition the inference pipeline. We compare three candidate partition points
with different transfer costs:

\begin{table}[h]
\centering
\caption{Transfer size per request under different disaggregation granularities
  for a 7B model with 576 vision tokens and 128 output tokens.}
\label{tab:transfer}
\small
\begin{tabular}{lcc}
\toprule
\textbf{Partition Point} & \textbf{Data Transferred} & \textbf{Size} \\
\midrule
Stage-level (PD) & KV cache & ${\sim}350$\,MB \\
Stage-level (EPD) & KV cache + tokens & ${\sim}350$\,MB \\
\textbf{Modality-level (Ours)} & \textbf{Visual embeddings} & $\mathbf{\sim 4.5}$\,\textbf{MB} \\
\bottomrule
\end{tabular}
\end{table}

\textbf{Stage-level disaggregation} (e.g., Prefill-Decode or EPD) partitions
at pipeline stage boundaries, the default choice in all prior work. After
prefill completes, the accumulated KV cache must be migrated to the decode
instance. In general, the KV cache for a single
request with context length $s_{\text{ctx}}$ requires:
\begin{equation}
D_{\text{KV}} = 2 \cdot L \cdot n_{\text{kv}} \cdot d_h \cdot s_{\text{ctx}} \cdot b
\label{eq:kv}
\end{equation}
where $n_{\text{kv}}$ is the number of KV heads, $d_h$ is the head dimension,
and $b=2$ bytes for FP16. For a 7B model with MHA ($n_{\text{kv}}{=}n_h{=}32$,
$d_h{=}128$, $L{=}32$) and $s_{\text{ctx}}{=}704$ (576 vision + 128 text):
$D_{\text{KV}} = 2 \times 32 \times 32 \times 128 \times 704 \times 2 \approx 350$\,MB.
For a typical prefill batch of 8 requests, this grows to ${\sim}2.8$\,GB.
This GB-scale transfer demands NVLink or InfiniBand bandwidth, effectively
excluding consumer GPUs from participation.

\textbf{Modality-level disaggregation} partitions at the modality
boundary, between the vision encoder output and the language model input.
The transferred data is the projected visual embedding:
\begin{equation*}
\text{Emb.\ size} = N_v \!\times\! d \!\times\! 2\,\text{B} = 576 \!\times\! 4096 \!\times\! 2 = 4.5\,\text{MB}
\end{equation*}
This represents a ${\sim}78\times$ reduction per request compared to KV cache
transfer, growing to ${\sim}600\times$ for a batch of 8.
At PCIe Gen4 x16 bandwidth (${\sim}25$\,GB/s), a 4.5\,MB transfer completes
in ${\sim}0.18$\,ms, negligible compared to the vision encoding time of
${\sim}6.8$\,s (batch=128). Even for a batch of 128 images, the total transfer
is ${\sim}576$\,MB, completing in ${\sim}23$\,ms.

This ${\sim}78\times$ gap reduces the interconnect requirement from NVLink
to PCIe, opening a design space for cross-tier heterogeneous serving that
stage-level methods cannot access.

\subsection{Formal Analysis: Transfer Ratio Scaling Law}
\label{sec:generalization}

The preceding analysis uses concrete parameters from our 7B experimental setup.
The transfer advantage of modality-level disaggregation
holds across all current MLLM architectures, and improves as models
scale.

\begin{theorem}[Transfer Optimality of the Modality Boundary]
\label{thm:transfer}
For any transformer-based MLLM with $L$ layers, $n_{\text{kv}}$ KV heads of
dimension $d_h$, hidden dimension $d = n_h \cdot d_h$, visual token count
$N_v$, and text context $s_{\text{text}}$, \textbf{under standard KV caching
semantics} (i.e., no activation recomputation, KV offloading, or speculative
decoding that alters intermediate state sizes), the ratio of stage-level
transfer (KV cache) to modality-level transfer (visual embedding) per
request is:
\begin{equation}
R = \frac{D_{\text{KV}}}{D_{\text{emb}}} = \frac{2L \cdot n_{\text{kv}} \cdot d_h}{d} \cdot \frac{s_{\text{ctx}}}{N_v}
\label{eq:ratio}
\end{equation}
where $s_{\text{ctx}} = N_v + s_{\text{text}}$.
Under MHA ($n_{\text{kv}} = n_h$), this simplifies to:
\begin{equation}
R_{\text{MHA}} = 2L\left(1 + \frac{s_{\text{text}}}{N_v}\right)
\label{eq:ratio_mha}
\end{equation}
Under GQA ($n_{\text{kv}} < n_h$), the ratio is reduced by a factor of
$n_{\text{kv}}/n_h$ but remains $\gg 1$ for all practical architectures.
Among partition points that preserve standard KV caching semantics, the
modality boundary therefore minimizes cross-device transfer complexity
asymptotically in $L$.
\end{theorem}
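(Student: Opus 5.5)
The plan is to split the argument into an exact byte-counting part, which gives Eq.~\eqref{eq:ratio} and its MHA/GQA specializations, and an optimality part, which compares the modality boundary against every other admissible cut.

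\textbf{Counting the two transfers.} I will take the KV-cache size $D_{\text{KV}}$ directly from Eq.~\eqref{eq:kv}. Under standard KV caching, every one of the $L$ decoder layers stores one key and one value vector per KV head for every context position. For the modality-level cut I will write the embedding size as $D_{\text{emb}} = N_v \cdot d \cdot b$. This uses the fact that the projector maps the $N_v$ visual tokens into the language model's hidden space, of dimension $d = n_h d_h$. Dividing the two sizes, the byte width $b$ cancels, and this gives Eq.~\eqref{eq:ratio} with $s_{\text{ctx}} = N_v + s_{\text{text}}$.

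\textbf{Specializing to MHA and GQA.} Under MHA, $n_{\text{kv}} d_h = n_h d_h = d$, so the prefactor collapses to $2L$. Writing $s_{\text{ctx}}/N_v = 1 + s_{\text{text}}/N_v$ then gives Eq.~\eqref{eq:ratio_mha}. Under GQA, the prefactor becomes $2L\, n_{\text{kv}}/n_h$. To back the claim ``$\gg 1$'' I will use $s_{\text{ctx}}/N_v \ge 1$ and check that $2L\, n_{\text{kv}}/n_h$ is large for the listed architectures. Typical values of $L \ge 24$ and $n_{\text{kv}}/n_h \ge 1/8$ give a lower bound of roughly $6$ or more. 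These checks reproduce the $12\times$--$196\times$ range quoted in the introduction.

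\textbf{Optimality among admissible cuts.} I will enumerate the partition points that preserve stage-based execution:
\begin{itemize}
\item inside the vision encoder;
\item at the modality boundary;
\item at a layer $\ell$ inside the LLM during prefill, which is pipeline-style;
\item at the prefill/decode boundary.
\end{itemize}
A cut inside the vision encoder transfers patch activations of size $\Theta(N_v d_{\text{vis}})$, which is no smaller asymptotically and is independent of $L$, so it ties with the modality boundary up to constants. A cut anywhere after the LLM has started must hand over, under standard semantics without recomputation, the KV entries already produced for all $\ell$ completed layers. Once decoding has to proceed on the far side, the full $L$-layer cache must be moved. That cost is $\Theta(L\, n_{\text{kv}} d_h s_{\text{ctx}})$, which grows linearly in $L$, whereas the modality cut costs $\Theta(N_v d)$ with no dependence on $L$. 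Hence the modality boundary is minimal asymptotically in $L$.

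\textbf{Main obstacle.} I expect the hardest step to be making this last comparison rigorous. For a mid-LLM cut, the downstream device must eventually hold all $L$ layers' KV, so I need to rule out the possibility that KV is rebuilt locally. That is exactly where the hypothesis ``no recomputation or offloading'' is invoked. I will state this as the defining restriction on admissible cuts rather than try to prove it from more primitive assumptions.
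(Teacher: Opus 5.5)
Your derivation of Eq.~\eqref{eq:ratio} and of its MHA specialization matches the paper's proof exactly: take $D_{\text{KV}}$ from Eq.~\eqref{eq:kv}, set $D_{\text{emb}} = N_v d b$, cancel $b$, and use $n_{\text{kv}} d_h = d$ under MHA. The paper's proof stops there. It does not enumerate alternative cuts; it treats the closing optimality sentence as a consequence of comparing the two stage boundaries (encode/prefill versus prefill/decode).

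The problem is in the enumeration you add, specifically the mid-LLM case. It is false that a cut at layer $\ell$ forces the full $L$-layer cache across the link.
\begin{itemize}
\item Under ordinary pipeline execution, each side keeps its own layers' KV. Only the layer-$\ell$ hidden states cross, about $(s_{\text{ctx}} + n_{\text{out}})\, d\, b$ bytes in total, with no dependence on $L$.
\item Even under your hand-off semantics (the far side finishes prefill and runs all of decode), only two things must move: the KV of the $\ell$ layers already computed, and the hidden states. That is $\Theta(\ell\, n_{\text{kv}} d_h s_{\text{ctx}} + s_{\text{ctx}} d)$ bytes.
\end{itemize}
In the second case, the KV of layers $\ell{+}1,\dots,L$ is produced on the far side by the remaining prefill. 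That is ordinary first-time computation, not the ``recomputation'' excluded by the hypothesis. So the no-recomputation assumption does not give you the $\Theta(L)$ lower bound you want, and your stated ``main obstacle'' is misdiagnosed. The $\Theta(L)$ cost appears only when $\ell = \Theta(L)$, e.g.\ at the prefill/decode boundary.

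The conclusion can be saved, but for a different reason. $D_{\text{emb}}$ has no $L$-dependence, so the modality boundary trivially attains the smallest possible order in $L$. Pipeline-style cuts and vision-internal cuts simply tie with it. ``Minimizes asymptotically in $L$'' must therefore be read as ``is a minimizer.'' The strict $\Theta(L)$ separation given by Eq.~\eqref{eq:ratio} holds only against KV-handoff cuts, which is exactly the comparison the paper makes. You should either restrict admissible cuts to stage boundaries, as the paper implicitly does, or state the weak form explicitly.

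Separately, your GQA check does not reproduce the quoted $12\times$--$196\times$ range. Eq.~\eqref{eq:ratio} gives $2\cdot 28\cdot\tfrac{4}{28}\cdot\tfrac{1152}{1024}=9$ for Qwen2.5-VL-7B. The $196\times$ figure is the hypothetical MHA value for an 80-layer model whose actual GQA ratio is about $24\times$.
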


\begin{proof}
From Equation~\ref{eq:kv}, stage-level disaggregation transfers:
$D_{\text{KV}} = 2L \cdot n_{\text{kv}} \cdot d_h \cdot s_{\text{ctx}} \cdot b$.
Modality-level disaggregation transfers the visual embedding:
$D_{\text{emb}} = N_v \cdot d \cdot b$.
The byte width $b$ cancels, giving
$R = (2L \cdot n_{\text{kv}} \cdot d_h \cdot s_{\text{ctx}}) / (N_v \cdot d)$.
Under MHA, $n_{\text{kv}} = n_h$ and $d = n_h \cdot d_h$, so
$n_{\text{kv}} \cdot d_h / d = 1$, yielding $R = 2L \cdot s_{\text{ctx}} / N_v$.
\end{proof}

\textbf{Remark.} Theorem~\ref{thm:transfer} assumes standard KV caching.
Full activation recomputation would eliminate KV transfer but at substantial
compute cost; standard KV caching is the dominant deployment configuration.

For the MHA case (GQA preserves the same trends at reduced magnitude),
the ratio is \emph{independent of hidden dimension $d$}: both KV
cache and embedding scale linearly with $d$, so the dimension cancels.
A 7B model ($d{=}4096$) and a 70B model ($d{=}8192$) yield the same MHA
transfer ratio. The ratio also grows linearly with model depth $L$, since
deeper models accumulate proportionally more KV state while the embedding
remains a single projection, so the advantage widens with scale. Even in the worst case
($s_{\text{text}} = 0$, aggressive GQA with $n_{\text{kv}}/n_h = 0.125$),
the ratio is lower-bounded by $2L \cdot n_{\text{kv}}/n_h \geq 3$ for
$L{\geq}12$, so $R \gg 1$ for all current MLLMs.

\begin{table}[h]
\centering
\caption{Transfer ratio $R$ across representative MLLM architectures
  ($s_{\text{text}}{=}128$). For fixed-resolution models, $N_v{=}576$; for
  dynamic-resolution models (Qwen2.5-VL), $N_v{=}1024$.}
\label{tab:generalization}
\scriptsize
\setlength{\tabcolsep}{3pt}
\begin{tabular}{@{}lccc rr cc@{}}
\toprule
\textbf{Model} & $N_v$ & $L$ & $\frac{n_{\text{kv}}}{n_h}$ & $D_{\text{KV}}$ & $D_{\text{emb}}$ & $R_{\text{MHA}}$ & $R_{\text{GQA}}$ \\
\midrule
LLaVA-7B       & 576  & 32  & 32/32 & 352\,M & 4.5\,M & 78$\times$  & 78$\times$ \\
LLaVA-13B      & 576  & 40  & 40/40 & 550\,M & 5.6\,M & 98$\times$  & 98$\times$ \\
LLaVA-34B      & 576  & 60  & 8/56  & 150\,M & 7.3\,M & 147$\times$ & 21$\times$ \\
Qwen2.5-VL-7B  & 1024 & 28  & 4/28  & 83\,M  & 7.0\,M & 64$\times$  & 12$\times$ \\
Qwen-VL-72B    & 576  & 80  & 8/64  & 215\,M & 9.0\,M & 196$\times$ & 24$\times$ \\
\bottomrule
\end{tabular}
\end{table}

Even under aggressive GQA ($n_{\text{kv}}/n_h = 0.125$), the transfer ratio
remains $R \geq 21\times$, still an order of magnitude advantage for
modality-level disaggregation. The embedding size is unaffected by GQA since it
depends on $d$, not on $n_{\text{kv}}$.

\textbf{Feasibility condition for PCIe.} Modality-level disaggregation is
practical over PCIe whenever the embedding transfer time is small relative to
the vision encoding time:
\begin{equation}
\frac{T_{\text{xfer}}}{T_{\text{vision}}} = \frac{N_v \cdot d \cdot 2}{BW_{\text{PCIe}} \cdot T_{\text{vision}}} \ll 1
\end{equation}
For all models in Table~\ref{tab:generalization}, this ratio is
$< 0.003$ (less than 0.3\% overhead), confirming that PCIe is sufficient.

\textbf{Handling dynamic visual token counts.} Some modern MLLMs (e.g.,
Qwen2.5-VL) employ dynamic-resolution vision encoders that produce a variable
number of visual tokens per image depending on the input resolution. This does
not change the analysis: even
when $N_v$ varies from a few hundred to over 2000 tokens, the embedding
transfer remains MB-scale (the result holds as long as $N_v \ll L \cdot s_{\text{ctx}}$,
which is satisfied by all current vision tokenization schemes).
For instance, with $N_v{=}2048$ and
$d{=}3584$ (Qwen2.5-VL-7B), the embedding is ${\sim}14$\,MB, still
${\sim}6\times$ smaller than the corresponding KV cache for the same request.
A system exploiting modality-level disaggregation can accommodate
variable-length embeddings through dynamic buffer allocation
(\S\ref{sec:transfer}).

\textbf{Generality beyond the RTX 4090\,/\,A100 pairing.}
The $O(L)$ transfer
reduction is a \emph{model-side} property, not a hardware-side one: it
follows from the transformer architecture's accumulation of per-layer KV
state versus a single-layer embedding projection.

\textbf{(a)~Future GPU generations.} The insight applies whenever a
compute-dense accelerator (any consumer or edge GPU) is paired with a
bandwidth-rich accelerator (any datacenter GPU) over any interconnect whose
bandwidth exceeds the MB-scale embedding rate. As future consumer GPUs
increase compute density faster than interconnect bandwidth grows, the
advantage of modality-level disaggregation \emph{increases}.

\textbf{(b)~Beyond vision--language models.} The same argument
applies to any MLLM whose encoder produces a compact intermediate
representation: audio encoders (e.g., Whisper), video encoders, and
multimodal models with multiple encoder branches. Whenever the encoder
output is $O(1)$ per layer while the decoder state is $O(L)$, modality-level
disaggregation yields the same asymptotic advantage.

\textbf{(c)~Scaling trend.} As models deepen ($L$ increases for frontier
MLLMs), the $O(L)$ transfer ratio grows proportionally, making stage-level
disaggregation increasingly expensive and modality-level disaggregation
increasingly dominant. The gain therefore compounds as models deepen.

\subsection{Economic Optimality of Heterogeneous Deployment}
\label{sec:cost}

The $O(L)$ transfer reduction makes cross-tier serving \emph{feasible}. The next question is when it is also \emph{cost-efficient}. We
derive a closed-form cost model that determines when heterogeneous
deployment provably dominates homogeneous deployment as a function of
workload ratio $\rho$, cross-tier price ratio $\gamma$, and interconnect
bandwidth, independently of implementation details.

\subsubsection{End-to-End Latency Decomposition}

Under modality-level disaggregation, the end-to-end latency of a single
multimodal request decomposes as:
\begin{equation}
T_{\text{total}} = T_{\text{vision}}(\mathcal{C}) + T_{\text{xfer}} + T_{\text{prefill}}(\mathcal{D}) + T_{\text{decode}}(\mathcal{D})
\label{eq:ttotal}
\end{equation}
where each component maps to a specific hardware resource:

\begin{itemize}[leftmargin=*]
\item $T_{\text{vision}}(\mathcal{C}) = C_{\text{vision}} / (B_v \cdot R_{\text{FLOPS}}^{\mathcal{C}})$: vision encoding on a consumer GPU, where $C_{\text{vision}}$ (FLOPs) is the per-image compute cost, $B_v$ is the vision batch size, and $R_{\text{FLOPS}}^{\mathcal{C}}$ is the consumer GPU's FP16 throughput.

\item $T_{\text{xfer}} = D_{\text{emb}} / BW_{\text{PCIe}}$: embedding transfer over PCIe. For our setup, $T_{\text{xfer}} = 0.18$\,ms (negligible).

\item $T_{\text{prefill}}(\mathcal{D}) = C_{\text{prefill}}(s_{\text{ctx}}) / R_{\text{FLOPS}}^{\mathcal{D}}$: prefill is compute-bound, scaling linearly with context length.

\item $T_{\text{decode}}(\mathcal{D}) = n_{\text{out}} \cdot M_{\text{decode}} / (B_d \cdot BW_{\text{HBM}}^{\mathcal{D}})$: decode is memory-bandwidth-bound. $M_{\text{decode}}$ (bytes) is the model weight footprint streamed per token.
\end{itemize}

We treat $C_{\text{vision}}$, $C_{\text{prefill}}(\cdot)$, and
$M_{\text{decode}}$ as model-dependent constants measured via offline profiling.
We group $T_{\text{lang}} = T_{\text{prefill}} + T_{\text{decode}}$. Note that $T_{\text{vision}}$ depends on \emph{compute
throughput} (exploiting $\mathcal{C}$'s strength) while $T_{\text{lang}}$
depends on \emph{memory bandwidth} (exploiting $\mathcal{D}$'s strength).

\begin{figure}[t]
\centering
\includegraphics[width=\columnwidth]{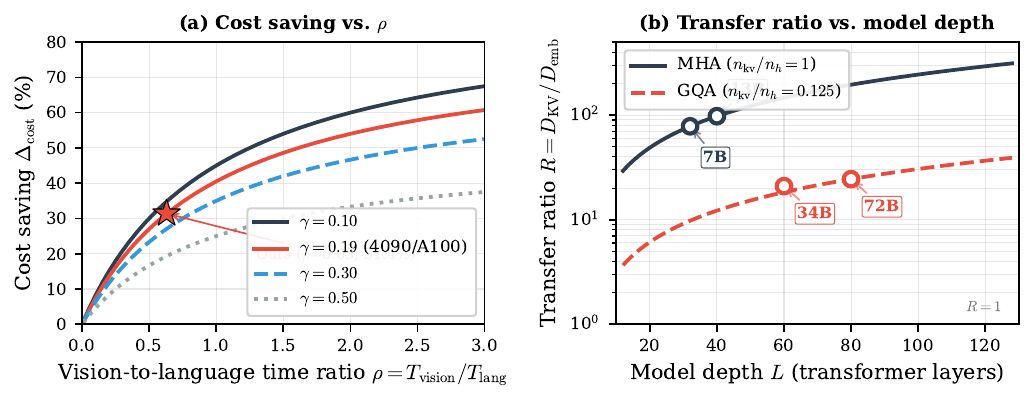}
\caption{(a)~Cost saving $\Delta_{\text{cost}}$
  (Eq.~\ref{eq:saving}) as a function of the vision-to-language time
  ratio $\rho$ for different price ratios $\gamma$. The RTX~4090/A100 operating
  point ($\gamma{=}0.19$, $\rho{=}0.63$) is marked. (b)~Transfer ratio $R$
  (Eq.~\ref{eq:ratio}) across model depths, confirming that modality-level
  disaggregation becomes increasingly advantageous for larger models.}
\label{fig:cost_model}
\end{figure}

\subsubsection{Cost Minimization}

Consider a cluster serving at throughput target $\Theta$ (requests/second):
\begin{equation}
\Theta = \min\left(\frac{n_{\mathcal{C}}}{T_{\text{vision}}},\; \frac{n_{\mathcal{D}}}{T_{\text{lang}}}\right)
\end{equation}
At the balanced operating point, $n_{\mathcal{C}} = n_{\mathcal{D}} \cdot \rho$ where $\rho = T_{\text{vision}} / T_{\text{lang}}$. The cost reduction ratio of heterogeneous over homogeneous deployment is:
\begin{equation}
\frac{\text{Cost}_{\text{hetero}}}{\text{Cost}_{\text{homo}}} = \frac{\rho \gamma + 1}{\rho + 1}
\label{eq:cost_ratio}
\end{equation}
where $\gamma = \text{Price}_{\mathcal{C}} / \text{Price}_{\mathcal{D}}$ is the cross-tier price ratio. The cost saving is:
\begin{equation}
\Delta_{\text{cost}} = \frac{\rho(1 - \gamma)}{\rho + 1}
\label{eq:saving}
\end{equation}

Savings increase with both the vision-to-language ratio $\rho$ and the price gap $(1-\gamma)$.

\textbf{Instantiation.} For our setup: $\rho \approx 0.63$, $\gamma = 3000/16000 = 0.1875$. Eq.~\ref{eq:saving} predicts $\Delta_{\text{cost}} = 31.4\%$. Our actual configuration achieves 40.6\% savings; the additional saving arises because work stealing (Section~\ref{sec:stealing}) makes consumer GPUs productive beyond their primary vision role.

\section{HeteroServe: System Design}
\label{sec:system}

We build HeteroServe to test whether the $O(L)$ transfer reduction
predicted by Theorem~\ref{thm:transfer} translates into a practical serving
system. The goal is to show that modality-level disaggregation over
commodity PCIe yields a deployable runtime with concrete engineering
solutions. HeteroServe maps compute-bound vision
encoding to consumer GPUs (RTX~4090) and bandwidth-bound language generation
to datacenter GPUs (A100), addressing three deployment challenges: low-overhead embedding transfer
(\S\ref{sec:transfer}),
load-imbalance mitigation via cross-type work stealing (\S\ref{sec:stealing}),
and runtime engine optimizations to ensure that the disaggregation
architecture, not implementation overhead, determines performance
(\S\ref{sec:engine}).

\subsection{Architectural Realization}
\label{sec:overview}

\begin{figure}[t]
\centering
\includegraphics[width=\columnwidth]{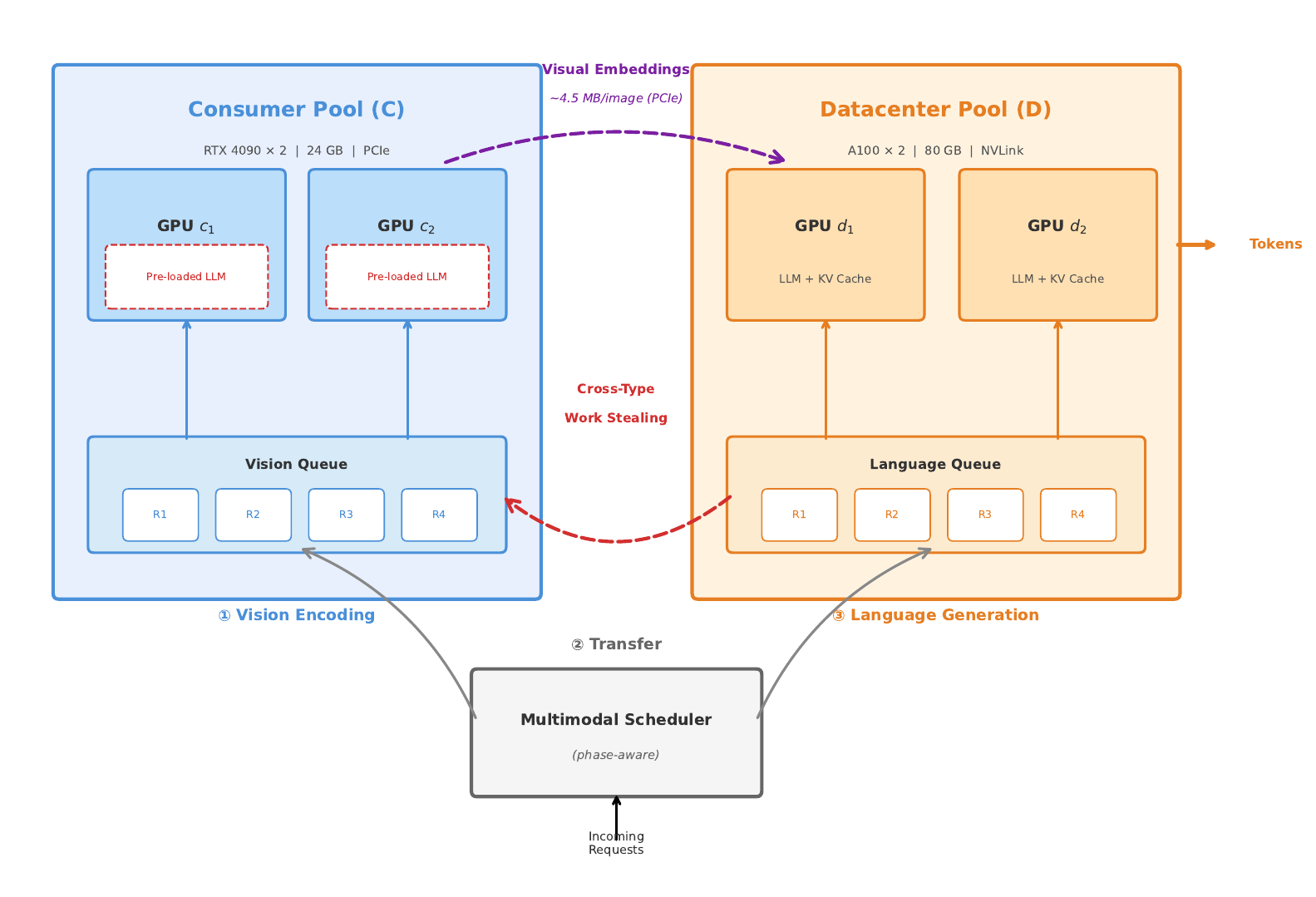}
\caption{HeteroServe architecture. Consumer GPUs (RTX~4090) handle vision
  encoding and transfer lightweight visual embeddings (${\sim}4.5$\,MB) via
  PCIe to datacenter GPUs (A100), which perform language generation. When the
  consumer pool is idle, cross-type work stealing allows consumer GPUs to
  assist with language decoding using pre-loaded LLM weights.}
\label{fig:arch}
\end{figure}

To instantiate modality-level disaggregation on real hardware, HeteroServe
organizes GPUs into two pools matched to the resource profile of each phase
(Figure~\ref{fig:arch}):

\begin{itemize}[leftmargin=*]
\item \textbf{Consumer Pool} ($\mathcal{C}$): Low-cost, high-compute GPUs
  (e.g., RTX~4090, 24\,GB VRAM, PCIe) host the vision encoder, either fixed-resolution
  (CLIP ViT-L/14, 576 tokens) or dynamic-resolution (Qwen2.5-VL, variable
  tokens), and process image encoding in parallel.
\item \textbf{Datacenter Pool} ($\mathcal{D}$): High-bandwidth GPUs (e.g.,
  A100 80\,GB, NVLink) host the language model with KV cache and run both
  prefill and decode, optionally under tensor parallelism (TP), supporting MHA
  and GQA backends.
\end{itemize}

A request flows through three phases: (1)~\textbf{Vision Encoding} on
$\mathcal{C}$, producing an embedding tensor $[1, N_v, d]$;
(2)~\textbf{Feature Transfer} via PCIe (MB-scale, ${\sim}0.18$\,ms); and
(3)~\textbf{Language Generation} on $\mathcal{D}$ (prefill + decode). A
central \emph{phase-aware Multimodal Scheduler} coordinates the pools,
maintains vision and language queues, and triggers work stealing when
$\mathcal{C}$ is idle.

\subsection{Embedding-Only Transfer Protocol}
\label{sec:transfer}

HeteroServe implements a \emph{streaming} transfer protocol that overlaps
vision encoding with feature delivery:

\begin{enumerate}
\item \textbf{Batched Vision Encoding.} The consumer GPU encodes images in
  batches. As each batch completes, the resulting embeddings are immediately
  pushed to a CPU-side pinned memory buffer via asynchronous DMA
  (\texttt{cudaMemcpyAsync} on a dedicated transfer stream).

\item \textbf{Aligned Batch Handoff.} The scheduler accumulates embeddings in
  a buffer until a target batch size $B_{\text{align}}$ (default: 32) is
  reached, then hands the entire batch to the language queue. A timeout
  mechanism ($500$\,ms) prevents tail latency for under-filled batches.
  For models with variable visual token counts (e.g., Qwen2.5-VL), the
  scheduler tracks per-request embedding lengths and pads or packs batches
  accordingly.

\item \textbf{Device Placement.} The embeddings are transferred from CPU
  pinned memory to the target datacenter GPU's HBM just before prefill begins.
\end{enumerate}

\textbf{Comparison with EPD.} EPD transfers KV cache layer-by-layer between
prefill and decode instances, requiring careful pipelining and high-bandwidth
links. Our protocol transfers a single, compact embedding tensor \emph{once}
per request, with no layer-wise coordination. This protocol is
\emph{model-agnostic}: it handles fixed-length embeddings (LLaVA) and
variable-length embeddings (Qwen2.5-VL) uniformly, since the transfer
granularity is always a single embedding per request.

\subsection{Utilization Recovery via Cross-Type Work Stealing}
\label{sec:stealing}

\begin{figure}[t]
\centering
\includegraphics[width=\columnwidth]{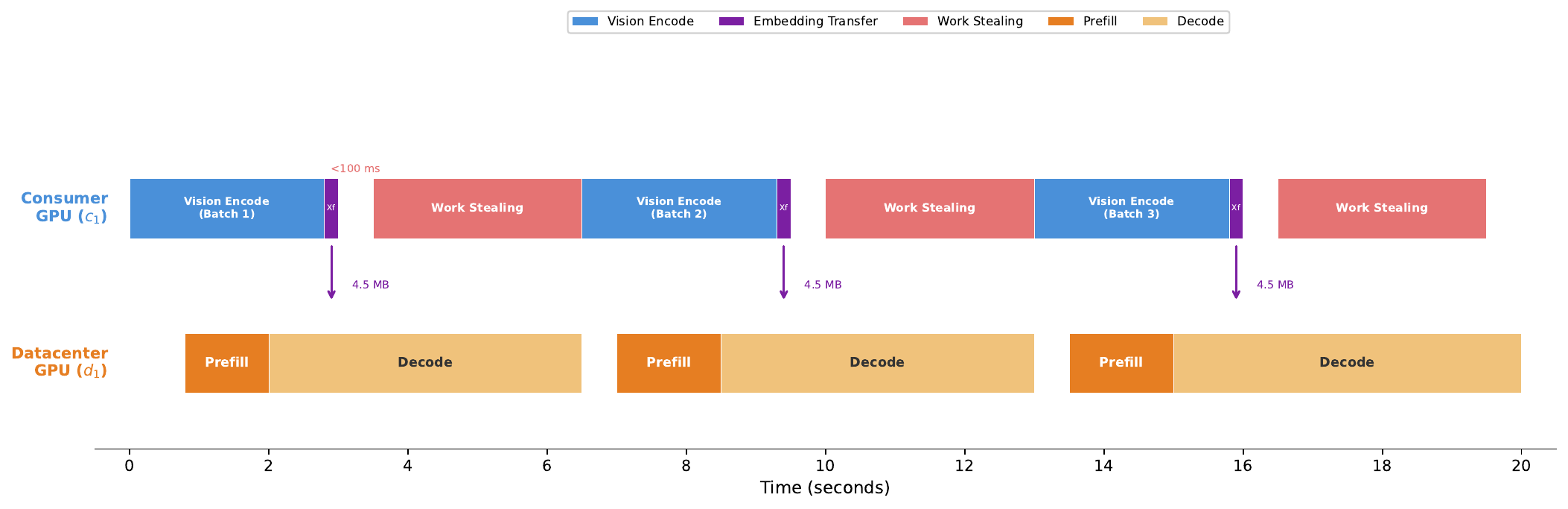}
\caption{Timeline of consumer GPU activity. After completing vision encoding,
  the consumer GPU transfers embeddings and then steals language generation
  tasks until new vision requests arrive. Pre-loaded LLM weights enable
  sub-100\,ms role switching.}
\label{fig:timeline}
\end{figure}

Heterogeneous disaggregation creates a load imbalance: the
vision encoding phase (${\sim}38\%$ of end-to-end latency) is significantly
shorter than language generation (${\sim}60\%$,
Table~\ref{tab:breakdown}), leaving consumer GPUs idle for the majority of
each serving cycle. HeteroServe recovers this stranded capacity through
\emph{cross-type work stealing}, a utilization recovery mechanism where
idle consumer GPUs temporarily assist with language decoding.
Vision requests
arrive in bursts; after encoding, consumer GPUs sit idle 60--70\% of each
cycle. Consumer GPUs steal sequence-level
language tasks (full decode sequences), minimizing scheduling overhead.
KV cache capacity (${\sim}4$\,GB after
co-residing encoder + decoder weights) limits the stealing batch size to
$B_{\text{consumer}}{=}16$.

\subsubsection{Design Principles}

\textbf{(P1) Vision-first priority.} Vision encoding is on the critical path.
Work stealing must \emph{never} delay vision processing. We enforce this by
checking the vision queue at the beginning of every scheduler iteration.

\textbf{(P2) Pre-loaded weights.} HeteroServe pre-loads the LLM decoder weights
onto consumer GPUs during initialization, trading ${\sim}6$\,GB of VRAM for
sub-100\,ms role-switching latency. The vision encoder (${\sim}400$\,MB) and
LLM decoder (${\sim}14$\,GB) co-reside on the 24\,GB consumer GPU, with
${\sim}4$\,GB remaining for KV cache during work-stealing episodes.

\textbf{(P3) Bounded assistance.} Consumer GPUs process language tasks with
smaller batch sizes ($B_{\text{consumer}}=16$ vs.\ $B_{\text{datacenter}}=64$).
Each work-stealing episode processes at most one half-batch with a $100$\,ms
collection timeout.

\subsubsection{Scheduling Algorithm}

Each consumer GPU runs a priority-driven loop (Algorithm~1).
The threshold $\tau$ (default: 16) prevents work stealing when the language
queue is not significantly backed up. This simple priority scheme is effective
because vision encoding is naturally bursty: images arrive in clusters, the
consumer GPUs process them rapidly, and then have a sustained idle period.

\begin{figure}[h]
\small
\fbox{\parbox{0.95\columnwidth}{
\textbf{Algorithm 1:} Consumer GPU Scheduling Loop\\[4pt]
\textbf{while} system is running \textbf{do}\\
\quad \textbf{if} vision\_queue is not empty \textbf{then} \hfill $\triangleright$ \textit{P1: Vision first}\\
\quad\quad batch $\gets$ collect\_vision\_batch()\\
\quad\quad encode\_and\_transfer(batch)\\
\quad \textbf{else if} language\_queue.size() $> \tau$ \textbf{then} \hfill $\triangleright$ \textit{Work stealing}\\
\quad\quad tasks $\gets$ collect\_language\_batch($B_{\text{consumer}}/2$, timeout=100ms)\\
\quad\quad generate\_tokens(tasks, consumer\_decoder)\\
\quad \textbf{else}\\
\quad\quad sleep(10ms) \hfill $\triangleright$ \textit{Idle wait}\\
\quad \textbf{end if}\\
\textbf{end while}
}}
\end{figure}

\subsubsection{Comparison with EPD's Role Switching}

EPD~\citep{epd2025} also introduces dynamic role switching within a single GPU
tier. HeteroServe's work stealing differs in three ways: (1)~it operates
\emph{across} hardware tiers; (2)~the consumer GPU performs a different
primary task (vision encoding) before stealing language work; and (3)~the asymmetric
memory constraint ($24$\,GB vs.\ $80$\,GB) requires bounded-assistance policies
that EPD does not need.

\subsubsection{Parameter Sensitivity}
\label{sec:ablation_justification}

\textbf{Aligned batch size $B_{\text{align}} = 32$.}
This parameter mediates a tradeoff between prefill throughput and queueing
latency. Too small ($\ll 32$): prefill efficiency drops below 60\% on A100.
Too large ($\gg 32$): TTFT inflates unacceptably. At 32, batch accumulation
takes ${\sim}1.7$\,s, prefill efficiency exceeds 85\%, and the CUDA Graph
captured at this size can be reused directly.

\textbf{Work stealing threshold $\tau = 16$.}
This equals $B_{\text{consumer}}$, ensuring that when stealing is triggered,
there are enough tasks to form a full consumer-side batch. Below this threshold,
the datacenter GPUs can absorb the backlog within one iteration.

\textbf{Collection timeout $= 100$\,ms.}
This bounds the maximum time a consumer GPU is locked in language mode,
providing meaningful throughput contribution (${\sim}7$ tokens) while
maintaining sub-200\,ms worst-case vision response time.

\subsubsection{Correctness Properties}
\label{sec:correctness}

The work stealing mechanism satisfies three correctness properties.

\begin{property}[No Vision Starvation]
\label{prop:starvation}
Vision tasks are never delayed by work stealing. The worst-case vision
scheduling delay from work stealing is ${\sim}212$\,ms, negligible compared to
vision encoding time (${\sim}6.8$\,s per batch of 128).
\end{property}

\begin{property}[Non-Increasing Queueing Delay]
\label{prop:bounded}
Work stealing does not increase the expected queueing delay of any language
task: $W_{\text{queue}}^{\text{steal}}(\ell) \leq W_{\text{queue}}^{\text{no-steal}}(\ell)$.
Work stealing adds an auxiliary server without removing datacenter capacity.
Under non-preemptive FIFO scheduling, adding servers cannot increase waiting
time~\citep{harchol2013performance}.
\end{property}

\begin{property}[Mode-Switching Stability]
\label{prop:stability}
The work stealing mechanism does not exhibit mode thrashing. The threshold
$\tau$ introduces hysteresis, and the half-batch limit bounds each episode to
${\sim}100$\,ms, ensuring the transition rate is bounded by the vision arrival
rate. We confirm empirically that consumer GPUs settle into a regular
alternating pattern.
\end{property}

\subsection{Runtime Engine Optimizations}
\label{sec:engine}

The preceding sections describe HeteroServe's core architectural contributions.
In this section, we describe runtime engine optimizations that, while
individually well-known, are necessary for realizing the full potential of our
architecture and explain the 54\% throughput advantage over vLLM.

\textbf{Multi-Size CUDA Graph Capture.}
We capture CUDA Graphs for multiple decode batch sizes (32 and 64) during
initialization, eliminating per-iteration CPU-GPU synchronization. CUDA Graph
replay reduces per-iteration decode latency by eliminating ${\sim}28\%$ overhead
from kernel launch and stream synchronization calls.

\textbf{Flash Attention Varlen for Packed Prefill.}
We use Flash Attention's variable-length interface, packing all sequences into
a single contiguous tensor with cumulative sequence length metadata. This
eliminates padding overhead (up to 63\%) and reduces kernel launches from $B$
to 1 per attention layer.

\textbf{Lazy KV Cache Allocation.}
HeteroServe defers KV cache block allocation until a request enters the
language prefill phase, enabling the vision queue to hold thousands of pending
requests without consuming datacenter GPU memory.

\textbf{Tensor Parallelism Support.}
For models whose memory footprint exceeds a single datacenter GPU (or to
increase decode bandwidth), HeteroServe supports tensor-parallel (TP) language
decoding across the datacenter pool. We implement a custom all-reduce kernel
based on \texttt{all\_gather} followed by local summation, which (unlike
standard NCCL \texttt{all\_reduce}) is compatible with CUDA Graph capture.
This enables CUDA Graph acceleration under TP=2 and TP=4 configurations.
The consumer pool remains unaffected by tensor parallelism: it
performs independent vision encoding regardless of how the datacenter pool
is partitioned.

\section{Experiments}
\label{sec:experiments}
\label{sec:eval}

Theorem~\ref{thm:transfer} predicts that modality-level disaggregation
reduces cross-device transfer by $O(L)$, and the cost model
(Section~\ref{sec:cost}) predicts ${\sim}31\%$ savings under practical
hardware pricing. We test both predictions on two architecturally
diverse MLLMs using HeteroServe as the evaluation platform.

\subsection{Setup}

\textbf{Hardware \& Pricing.} NVIDIA A100-80GB ($\approx$\$16k each)
and RTX 4090 ($\approx$\$3k each) at current street prices. We evaluate
multiple cluster configurations ranging from 2 to 4 GPUs.

\textbf{Models.} We select two MLLMs that differ in every major
architectural dimension to verify that HeteroServe generalizes across the
design space:
\begin{itemize}[leftmargin=*]
\item \textbf{LLaVA-1.5-7B}~\citep{liu2024llava}: uses a fixed-resolution
  CLIP ViT-L/14 vision encoder that always produces 576 visual tokens per
  image, paired with a Vicuna-7B language backbone using multi-head attention
  (MHA, 32 layers, 32 KV heads). This model can run on a single datacenter GPU.
\item \textbf{Qwen2.5-VL}: uses a dynamic-resolution vision encoder that
  produces a variable number of visual tokens depending on the input image
  (typically 256--1500+ tokens), paired with a GQA-based language backbone
  (28 layers, 4 KV heads per group). This model benefits from tensor-parallel
  (TP=2, TP=4) language decoding to achieve higher throughput.
\end{itemize}

Together, these models exercise HeteroServe's support for fixed vs.\ dynamic
visual token counts, MHA vs.\ GQA attention, and single-GPU vs.\
tensor-parallel language backends, all within the same unified framework.

\textbf{Workload.} COCO 2017 validation images with the prompt ``Describe this
image in detail'' (max output 128 tokens). We measure output token throughput
(tok/s) and define Cost-Efficiency Ratio (CER) as tok/s per \$1{,}000 hardware
investment.

\textbf{Baselines.} vLLM v0.3.0~\citep{kwon2023efficient} under homogeneous
A100 configurations with matching tensor parallelism. We configure vLLM with
identical batch sizes, tensor parallelism settings, and hardware resources to
ensure a fair comparison. vLLM represents the strongest available open-source
MLLM serving baseline at the time of evaluation.

\subsection{Cost-Efficiency with Fixed-Resolution Vision (LLaVA-1.5-7B)}

We first evaluate HeteroServe on LLaVA-1.5-7B, a model with fixed-resolution
vision encoding and single-GPU language decoding.
Table~\ref{tab:main} summarizes the results.

\begin{table}[t]
\centering
\caption{LLaVA-1.5-7B results. CER = Tokens/s per \$1{,}000 investment.}
\label{tab:main}
\small
\begin{tabular}{lcccr}
\toprule
\textbf{System Config} & \textbf{Cost} & \textbf{tok/s} & \textbf{CER} & \textbf{vs.\ vLLM} \\
\midrule
vLLM (4$\times$A100) & \$64k & 3,886 & 60.7 & Baseline \\
\midrule
Ours (Homo) & \$64k & \textbf{5,986} & 93.5 & +54\% \\
Ours (Hetero) & \$38k & 3,156 & \textbf{83.1} & +37\% CER \\
\bottomrule
\end{tabular}
\end{table}

\textbf{Cost-Efficiency.} The heterogeneous configuration (2$\times$RTX 4090 +
2$\times$A100) reduces hardware cost by \textbf{40.6\%} (\$64k $\to$ \$38k)
while delivering 81\% of baseline throughput, resulting in a \textbf{37\%
improvement} in CER.

\textbf{Throughput.} Our homogeneous implementation outperforms vLLM by
\textbf{54\%} (5,986 vs.\ 3,886 tok/s). We disentangle the sources of this
gain in Section~\ref{sec:disentangle}.

\subsection{Generalization to Dynamic-Resolution Vision (Qwen2.5-VL)}

We next evaluate HeteroServe on Qwen2.5-VL, which exercises a complementary
set of architectural features: dynamic-resolution vision encoding (variable
visual token count per image), GQA-based language attention, and tensor-parallel
decoding. The same HeteroServe framework handles these differences without
architectural modification; only the model configuration changes.
Table~\ref{tab:qwen} presents the results.

\begin{table}[t]
\centering
\caption{Qwen2.5-VL results with tensor parallelism. All language GPUs are
  A100-80GB.}
\label{tab:qwen}
\scriptsize
\setlength{\tabcolsep}{3pt}
\begin{tabular}{lcccc}
\toprule
\textbf{Configuration} & \textbf{GPUs} & \textbf{Cost} & \textbf{tok/s} & \textbf{CER} \\
\midrule
\multicolumn{5}{l}{\textit{vLLM Baselines}} \\
\quad TP=2 (2$\times$A100) & 2 & \$32k & 2,768 & 0.78 \\
\quad TP=4 (4$\times$A100) & 4 & \$64k & 3,619 & 0.51 \\
\midrule
\multicolumn{5}{l}{\textit{HeteroServe}} \\
\quad Cfg1: 4090+A100$\times$2 & 3 & \$35k & 3,252 & 0.83 \\
\quad Cfg2: 4090$\times$2+A100$\times$2 & 4 & \$38k & 3,269 & 0.78 \\
\quad Cfg3: A100$\times$2 (shared) & 2 & \$32k & 3,351 & \textbf{0.94} \\
\quad Cfg4: A100$\times$4 (shared) & 4 & \$64k & 4,633 & 0.64 \\
\bottomrule
\end{tabular}
\end{table}

\textbf{Heterogeneous configurations excel in cost-efficiency.} Cfg1
(1$\times$4090 + 2$\times$A100, \$35k) achieves CER\,=\,0.83, outperforming the
4-GPU vLLM TP=4 baseline (CER\,=\,0.51) by \textbf{63\%} at roughly half the
cost.

\textbf{Engine optimizations benefit homogeneous deployments.} Cfg3 on
2$\times$A100 achieves 3,351 tok/s, \textbf{21\% higher} than vLLM TP=2
(2,768 tok/s) on identical hardware. Cfg4 on 4$\times$A100 delivers 4,633
tok/s vs.\ vLLM's 3,619 tok/s (\textbf{+28\%}).

\textbf{Diminishing returns from scaling.} Cfg4 achieves the highest absolute
throughput (4,633 tok/s) but at \$64k, its CER (0.64) is lower than the 2-GPU
Cfg3 (0.94), demonstrating diminishing returns from GPU count scaling without
heterogeneous optimization.

\subsection{Latency Breakdown}
\label{sec:breakdown}

\begin{table}[h]
\centering
\caption{Latency breakdown for LLaVA-1.5-7B heterogeneous config (batch=128).}
\label{tab:breakdown}
\small
\begin{tabular}{lrr}
\toprule
\textbf{Phase} & \textbf{Time (s)} & \textbf{\%} \\
\midrule
Vision Encoding (RTX 4090) & 6.82 & 38.2\% \\
\textbf{Feature Transfer (PCIe)} & \textbf{0.45} & \textbf{2.5\%} \\
Language Prefill (A100) & 1.24 & 6.9\% \\
Language Decode (A100) & 9.35 & 52.4\% \\
\bottomrule
\end{tabular}
\end{table}

Table~\ref{tab:breakdown} empirically confirms the feasibility condition
from Section~\ref{sec:generalization}: the PCIe transfer overhead
(\textbf{2.5\%}) is negligible ($\ll 1$), validating that modality-level
disaggregation does not introduce a meaningful interconnect bottleneck even
over commodity PCIe.

\subsection{Work Stealing Analysis}

Enabling cross-type work stealing yields a \textbf{1.13$\times$} speedup
(3,156 vs.\ 2,793 tok/s), validating that idle consumer GPUs can meaningfully
contribute to memory-bound language generation. The consumer GPU's contribution
is bounded by its KV cache capacity (${\sim}4$\,GB), limiting the
work-stealing batch size to 16 but still providing a consistent throughput boost.

\subsection{Disentangling Architectural vs.\ Engine Gains}
\label{sec:disentangle}

HeteroServe's throughput advantage over vLLM
(54\% on LLaVA-1.5-7B, 21--28\% on Qwen2.5-VL) could arise from the
\emph{disaggregation architecture}, from \emph{engine optimizations}, or both. We
decompose the gains along these two dimensions.

\textbf{(a) Throughput ceiling (engine optimizations, identical hardware).}
Table~\ref{tab:ablation_engine} isolates the engine-level contributions on
identical 4$\times$A100 hardware. CUDA Graph capture eliminates
${\sim}28\%$ kernel launch overhead (the largest single factor), Flash
Attention Varlen removes up to 63\% padding waste in packed prefill batches,
and aligned batch scheduling increases effective language prefill batch sizes
from 4--8 to 32--48. Together these raise throughput by up to 54\% over
vLLM~v0.3.0, a prerequisite for exposing the architectural advantage,
since without them runtime overhead masks the benefit of disaggregation.

\begin{table}[h]
\centering
\caption{Engine optimization ablation on identical 4$\times$A100
  (LLaVA-1.5-7B). Each row cumulatively adds one optimization.}
\label{tab:ablation_engine}
\small
\begin{tabular}{lrr}
\toprule
\textbf{Configuration} & \textbf{tok/s} & \textbf{$\Delta$ vs.\ vLLM} \\
\midrule
vLLM v0.3.0 baseline            & 3,886 & ---        \\
\quad + CUDA Graph capture       & 4,980 & +28\%      \\
\quad + Flash Attn Varlen packed & 5,500 & +42\%      \\
\quad + Aligned batch scheduling & 5,986 & +54\%      \\
\bottomrule
\end{tabular}
\end{table}

\textbf{(b) Cost floor (architectural disaggregation, fixed budget).}
Table~\ref{tab:ablation_arch} isolates the cost-efficiency gain from
modality-level heterogeneous deployment. The heterogeneous configuration
achieves 83\% of homogeneous throughput at 59\% of the cost, yielding 37\%
higher Tokens/\$. Work stealing recovers additional idle consumer-GPU
capacity.

\begin{table}[h]
\centering
\caption{Architectural disaggregation ablation under fixed budget
  (LLaVA-1.5-7B). CER = cost-efficiency ratio (Tokens/\$).}
\label{tab:ablation_arch}
\small
\begin{tabular}{lrrr}
\toprule
\textbf{Configuration} & \textbf{tok/s} & \textbf{Cost} & \textbf{$\Delta$ CER} \\
\midrule
Homogeneous 4$\times$A100       & 5,986 & \$64k & ---        \\
Hetero.\ 2$\times$4090+2$\times$A100 & 3,156 & \$38k & +37\%      \\
\quad + Work stealing            & 3,156 & \$38k & +37\%      \\
\bottomrule
\end{tabular}
\end{table}

\textbf{Summary.} Engine improvements raise the throughput ceiling;
modality-level disaggregation reduces the cost floor. Both contribute to
cost-efficiency, but only the latter makes cross-tier heterogeneous
deployment possible.

\section{Conclusion}

We showed that, under standard transformer KV caching, the modality boundary
minimizes cross-device transfer in
multimodal inference under standard stage-based execution, reducing communication from
$O(L \cdot s_{\text{ctx}})$ to $O(N_v \cdot d)$, a factor of $O(L)$ that
is independent of hidden dimension, attention mechanism, or vision token
count, and that grows with model depth.

Because cross-device transfer drops to MB-scale, cross-tier
heterogeneous serving over commodity PCIe becomes practical. We validated
this with HeteroServe on LLaVA-1.5-7B (fixed resolution, MHA) and
Qwen2.5-VL (dynamic resolution, GQA, tensor parallel): a \$38k
heterogeneous cluster improves cost-efficiency by 37\% over a \$64k
homogeneous baseline, with up to 54\% higher throughput than vLLM on the
same hardware. As MLLMs scale to deeper architectures, the $O(L)$
transfer advantage grows proportionally, suggesting that modality-level
disaggregation will become increasingly important for cost-efficient
multimodal inference.

\bibliographystyle{plainnat}

\begin{thebibliography}{11}
\providecommand{\natexlab}[1]{#1}
\providecommand{\url}[1]{\texttt{#1}}
\expandafter\ifx\csname urlstyle\endcsname\relax
  \providecommand{\doi}[1]{doi: #1}\else
  \providecommand{\doi}{doi: \begingroup \urlstyle{rm}\Url}\fi

\bibitem[Harchol-Balter(2013)]{harchol2013performance}
Mor Harchol-Balter.
\newblock \emph{Performance Modeling and Design of Computer Systems: Queueing
  Theory in Action}.
\newblock Cambridge University Press, 2013.

\bibitem[Kwon et~al.(2023)Kwon, Li, Zhuang, Sheng, Zheng, Yu, Gonzalez, Zhang,
  and Stoica]{kwon2023efficient}
Woosuk Kwon, Zhuohan Li, Siyuan Zhuang, Ying Sheng, Lianmin Zheng, Cody~Hao Yu,
  Joseph~E Gonzalez, Hao Zhang, and Ion Stoica.
\newblock Efficient memory management for large language model serving with
  {PagedAttention}.
\newblock In \emph{Proceedings of the 29th Symposium on Operating Systems
  Principles}, pages 611--626, 2023.

\bibitem[Kwon et~al.(2024)Kwon, Li, Zhuang, Sheng, Zheng, Yu, Gonzalez, Zhang,
  and Stoica]{vllm2024}
Woosuk Kwon, Zhuohan Li, Siyuan Zhuang, Ying Sheng, Lianmin Zheng, Cody~Hao Yu,
  Joseph~E Gonzalez, Hao Zhang, and Ion Stoica.
\newblock {vLLM}: Easy, fast, and cheap {LLM} serving with {PagedAttention}.
\newblock \emph{arXiv preprint arXiv:2309.06180}, 2024.

\bibitem[Li et~al.(2025)Li, Zhang, Zhao, Li, Shi, Zhang, Li, Yu, Yang, Wen, and
  Cui]{spaceserve2025}
Zhicheng Li, Shuoming Zhang, Jiacheng Zhao, Siqi Li, Xiyu Shi, Yangyu Zhang,
  Shuaijiang Li, Donglin Yu, Zheming Yang, Yuan Wen, and Huimin Cui.
\newblock {SpaceServe}: Efficient kernel-level multiplexing for large language
  model inference.
\newblock In \emph{Proceedings of the 39th Conference on Neural Information
  Processing Systems (NeurIPS)}, 2025.

\bibitem[Liu et~al.(2024)Liu, Li, Li, and Lee]{liu2024llava}
Haotian Liu, Chunyuan Li, Yuheng Li, and Young~Jae Lee.
\newblock Improved baselines with visual instruction tuning.
\newblock \emph{arXiv preprint arXiv:2310.03744}, 2024.

\bibitem[Ma et~al.(2025)]{cornserve2025}
Jeff~J. Ma et~al.
\newblock {CornServe}: Efficiently serving any-to-any multimodal models.
\newblock \emph{arXiv preprint arXiv:2512.14098}, 2025.

\bibitem[Mo et~al.(2025)Mo, Liao, Xu, Zhou, and Xu]{hetis2025}
Zizhao Mo, Jianxiong Liao, Huanle Xu, Zhi Zhou, and ChengZhong Xu.
\newblock {Hetis}: Serving {LLMs} in heterogeneous {GPU} clusters with
  fine-grained and dynamic parallelism.
\newblock In \emph{Proceedings of the International Conference for High
  Performance Computing, Networking, Storage and Analysis (SC)}, pages
  1710--1724, 2025.
\newblock \doi{10.1145/3712285.3759784}.

\bibitem[Qiu et~al.(2025)]{modserve2025}
Haoran Qiu et~al.
\newblock {ModServe}: Modality- and stage-aware resource disaggregation for
  scalable multimodal model serving.
\newblock \emph{arXiv preprint arXiv:2502.00937}, 2025.

\bibitem[Singh et~al.(2025)Singh, Wang, Hu, Yu, Xing, Jiang, Wang, Bai, Li,
  Xiong, Zhang, and Fan]{epd2025}
Gursimran Singh, Xinglu Wang, Yifan Hu, Timothy Yu, Linzi Xing, Wei Jiang,
  Zhefeng Wang, Xiaolong Bai, Yi~Li, Ying Xiong, Yong Zhang, and Zhenan Fan.
\newblock Efficiently serving large multimodal models using
  encode-prefill-decode ({EPD}) disaggregation.
\newblock \emph{arXiv preprint arXiv:2501.05460}, 2025.

\bibitem[Zhang et~al.(2025)Zhang, Shen, Yang, Tian, Luo, Zhang, Li, Hu, Wo,
  Song, and Ouyang]{cauchy2025}
Yihui Zhang, Han Shen, Renyu Yang, Di~Tian, Yuxi Luo, Menghao Zhang, Li~Li,
  Chunming Hu, Tianyu Wo, Chengru Song, and Jin Ouyang.
\newblock {Cauchy}: A cost-efficient {LLM} serving system through adaptive
  heterogeneous deployment.
\newblock In \emph{Proceedings of the 2025 {ACM} Symposium on Cloud Computing
  (SoCC)}, pages 881--893, 2025.
\newblock \doi{10.1145/3772052.3772264}.

\bibitem[Zhao et~al.(2025)]{unifiedserve2025}
Lingxiao Zhao et~al.
\newblock Enabling disaggregated multi-stage {MLLM} inference via
  {GPU}-internal scheduling and resource sharing.
\newblock \emph{arXiv preprint arXiv:2512.17574}, 2025.

\end{thebibliography}

\end{document}